%% file: arxiv_CRCCBF_IROS2025.tex
\documentclass[letterpaper, 10 pt,dvipsnames, conference]{ieeeconf}  

\IEEEoverridecommandlockouts                              

\usepackage{amsmath}
\usepackage{amssymb}
\usepackage{amsfonts}
\usepackage{algorithm}
\usepackage[noend]{algpseudocode}
\usepackage{comment}
\usepackage{subcaption}
\usepackage{graphicx}
\usepackage{booktabs}
\usepackage{multirow}
\usepackage{colortbl}
\let\labelindent\undefined
\usepackage{enumitem}
\usepackage{thm-restate}
\definecolor{lightblue}{rgb}{0.85,0.91,0.95}
\newcommand{\mc}{\mathcal}
\newcommand{\mb}{\mathbb}
\usepackage{cite}
\usepackage{graphicx}
\usepackage{float}

\usepackage[colorlinks=true,linkcolor=blue,citecolor=blue]{hyperref} 

\usepackage{graphics} 
\usepackage{epsfig} 
\newtheorem{definition}{Definition}

\newtheorem{lemma}{Lemma}

\newtheorem{remark}{Remark}
\newtheorem{assumption}{Assumption}

\newcommand{\Lip}{\mathfrak{L}}

\usepackage{xcolor}

\input{macros}

\newcommand{\hum}{\mathrm{\tt H}}
\newcommand{\rob}{\mathrm{\tt R}}

\title{\LARGE \bf
Safe Probabilistic Planning for Human-Robot Interaction using Conformal Risk Control}

\author{Jake Gonzales$^{1}$ $\quad$ Kazuki Mizuta$^{2}$ $\quad$ Karen Leung$^{2*}$ $\quad$ Lillian J. Ratliff$^{1*}$%
\thanks{$^{1}$University of Washington, Department of Electrical and Computer Engineering, $^{2}$University of Washington, Department of Aeronautics and Astronautics.  $^*$Equal advising. JG and KM are supported by the UW + Amazon Science Hub Fellowship. KL was supported by a National Science Foundation award under Grant No. 2430686.}%
\thanks{Email: \texttt{\{jakegonz,mizuta,ratliffl,kymleung\}@uw.edu}}}

\begin{document}

\maketitle
\thispagestyle{empty}
\pagestyle{empty}

\begin{abstract}
In this paper, we present a novel probabilistic safe control framework for human-robot interaction that combines control barrier functions (CBFs) with conformal risk control to provide formal safety guarantees while considering complex human behavior. The approach uses conformal risk control to quantify and control the prediction errors in CBF safety values and establishes formal guarantees on the probability of constraint satisfaction during interaction. We introduce an algorithm that dynamically adjusts the safety margins produced by conformal risk control based on the current interaction context. Through experiments on human-robot navigation scenarios, we demonstrate that our approach significantly reduces collision rates and safety violations as compared to baseline methods while maintaining high success rates in goal-reaching tasks and efficient control. The code, simulations, and other supplementary material can be found on the project website: \url{https://jakeagonzales.github.io/crc-cbf-website/}. 

\end{abstract}


\section{Introduction} 
The safe deployment of autonomous robots in human environments, from autonomous driving to service robots, presents fundamental challenges in ensuring safety under the unpredictability of human behavior. In particular, the uncertainty in human behavior is \textit{multimodal}, where there could be multiple possible distinct behaviors (e.g., passing to the left or right of someone), and \textit{history dependent}, where future behaviors depend on interaction history.
Common approaches to safety-critical control in the presence of uncertainty make simplifying distributional assumptions (e.g. Gaussian) about the uncertainty and leverage chance constraints to provide probabilistic safety guarantees \cite{MoraChanceConstrained2019}. While sampling-based methods  can handle more complex distributions, they lack formal safety guarantees and can be computationally expensive for real-time applications \cite{Williams2015ModelPP}. 

A core challenge is in developing robot planning algorithms that can handle the complex uncertainty associated with human behavior without being overly conservative and 
providing quantifiable confidence in constraint satisfaction.
\begin{figure}[t]
    \centering
    \includegraphics[width=0.9\columnwidth]{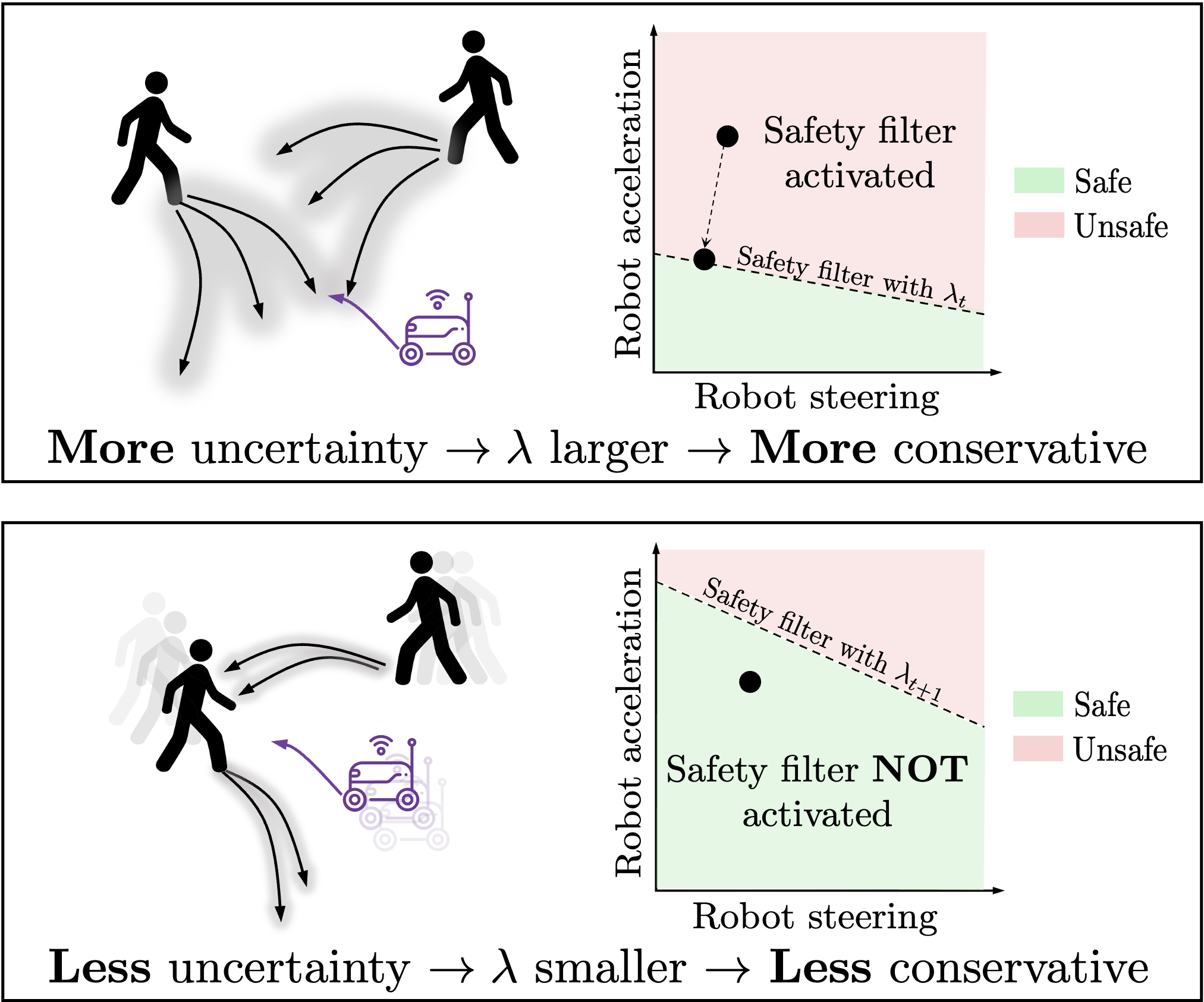}
    \caption{\small We develop a risk-aware adaptive safety filter that dynamically adjusts robot conservativeness based on the uncertainty of human behavior. The plots show the robot's control space with red regions indicating unsafe actions. In high-risk scenarios (top), our approach increases the \textbf{safety margin parameter $\boldsymbol{\lambda}$}, creating more restrictive control constraints. In low-risk scenarios (bottom), $\lambda$ decreases, allowing less conservative behavior while maintaining safety guarantees.}
   \vspace{-6mm}
    \label{fig:overview}
\end{figure}
Recent theoretical advancements in statistical verification have provided powerful tools for establishing formal guarantees on complex predictive models, including deep neural networks. In particular, \emph{conformal prediction} \cite{CP} offers a distribution-free framework for constructing prediction sets with guaranteed finite-sample coverage properties, and has been extended to the adaptive setting to maintain coverage guarantees under arbitrary distribution shifts using adaptive conformal prediction \cite{Gibbs2022ACP}. This extension is important for robot decision-making in dynamic environments where adaptive conformal prediction allows you to recalibrate prediction sets online, without requiring stationarity assumptions. 
Conformal Risk Control (CRC) \cite{angelopoulos2023conformalriskcontrol} 
further generalizes this approach by directly controlling arbitrary prediction risks through guarantees on the expected value of user-defined loss functions, making it particularly suitable for safety-critical applications where precise risk quantification is essential.

In this work, we present a principled framework that combines conformal risk control with control-theoretic safety constraints to achieve provable safety guarantees in human-robot interactions. 
Specifically, we apply CRC to algorithmically tune a \textit{safety margin} for a safety constraint, expressed using Control Barrier Functions (CBFs). 
We devise a strategy to tune the safety margin to meet a user-desired probabilistic safety specification. We apply this technique to learn a model using offline data to estimate the safety margin conditioned on a given context, thereby resulting in robots that dynamically adjust their conservativeness based on uncertainty. Figure~\ref{fig:overview} illustrates this concept.
The practical implementation of our approach does not rely on any distributional assumptions or worst-case bounds, and is both computationally efficient for real-time control and theoretically grounded.

\noindent\textbf{Statement of contributions.} The contributions are multifold: \textbf{(i)} a novel safe control framework combining CBFs with CRC to provide high-probability safety guarantees for human-robot interactions; \textbf{(ii)} rigorous theoretical analysis establishing the connection between CRC-based uncertainty quantification and probabilistic safety guarantees; \textbf{(iii)} an assumption-light algorithm for dynamic uncertainty adaptation of safety margins;
and \textbf{(iv)} a practically implementable and empirically validated approach to handle temporal dependencies in human-robot interactions. To our knowledge, this is the first use of conformal risk control to quantify uncertainty in control-theoretic safety constraints for joint human-robot systems. 

\subsection{Related work}

\textbf{Safe planning with CBFs.} By enforcing control invariance, CBFs provide theoretical guarantees for safety-critical control \cite{Ames2019ControlBF}. Yet, the practical implementation of CBFs faces challenges. For example, manual construction of CBFs is increasingly difficult for high-dimensional systems with input constraints. This limitation has motivated learning-based approaches to approximate CBFs \cite{Saveriano2019, Lindemann2020LearningHC, LearningCBFsExpertDems2020, SoHow2Train2024}.

Although CBFs have shown promise in single-agent settings, their application to multi-agent scenarios---particularly with human agents---remains underdeveloped. The inherent stochasticity of human behavior and unknown policies of opponents in multi-agent settings presents fundamental challenges that classical CBF formulations struggle to adequately address.
Towards this end, CBFs have been extended to robust formulations  \cite{Da2023RobustCB, Xu2016RobustnessOC, CosnerCulbertsonEtAl2023} for \emph{a priori} known bounded disturbances, however this is not practical for real-world deployment with complex and unbounded uncertainties due to human interactions or other autonomous agents. 

\textbf{Safe planning with conformal prediction.} To address uncertainties in multi-agent settings with minimal assumptions, there has been a line of work on \emph{conformal prediction} in safety critical control. 
For instance, ignoring feedback effects, recent work \cite{LindemannSafePlanningCP2023} uses conformal prediction in a model predictive control formulation to model opponent uncertainty. In a follow-up work \cite{dixit2022adaptiveconformalpredictionmotion}, the authors leveraged ideas from  adaptive conformal prediction to adapt to distribution shift by quantifying uncertainty from past agent observations without requiring stationarity assumptions on agent motion. More closely aligned with our approach, \cite{zhou2024safetycriticalcontroluncertaintyquantification} integrates adaptive conformal prediction  with CBF constraints in order to safely adapt to uncertainty in multi-robot systems. 

Conformal prediction methods create confidence regions around other agents' trajectories by calibrating prediction sets for coverage.
As such they lack the ability to directly control the \emph{risk} of safety constraint violations. Instead of targeting fixed coverage, CRC aims to control the expected loss (risk) under a user-specified loss function and potentially adaptive or contextual thresholds.
Indeed, our approach addresses this limitation by leveraging CRC to explicitly quantify prediction errors in safety values consequent of uncertainty due to other agent behavior, and encoding this directly into the safety constraints. This
provides a more precise and principled approach to managing the trade-off between safety and performance.
Parallel work by \cite{huriot2024safedecentralizedmultiagentcontrol} proposes a similar formulation that uses conformal decision theory \cite{ConformalDecisionTheory2024} to tune the conservativeness of the CBF constraint. A limitation of this approach, however, is that it only provides guarantees on the \emph{average constraint violation} over time rather than probabilistic safety guarantees for each timestep, arguably a crucial need for safety-critical control involving humans. We establish this theoretical connection and empirically evaluate our methods using
human behavior models trained on real-world pedestrian data. 

\section{Preliminaries}

Consider a human-robot joint dynamical system where the goal is to design a control policy for the robot such that the joint state $x(t)\in \mb{R}^n$ of the dynamical system remains safe  with high probability given the stochastic unknown policies of all other agents.

Let $\rob$ denote the index of the robot agent, and $\hum=\{1,\ldots, H\}$ be the index set of all agents excluding the robot agent. 
For any time $t\in \mb{R}_+$, define the joint state $x(t):=(x_{\rob}(t),x_{\hum}(t))$ where $x_{\rob}(t) \in \mc{X}_\rob \subseteq \mathbb{R}^{n_\rob}$ and $x_{\hum}(t) \in\mc{X}_\hum \subseteq \mathbb{R}^{n_\hum}$, denote the robot and human states, respectively, and let $\mc{X}:=\mc{X}_{\rob}\times\mc{X}_{\hum}$ define the joint state space. Analogously, 
 define the joint control input $u(t):=(u_{\rob}(t),u_{\hum}(t))$ where $u_{\rob}(t) \in \mc{U}_\rob\subseteq \mathbb{R}^{m_\rob}$ and $u_{\hum}(t) \in \mathbb{R}^{m_\hum}$ denote the robot and human control inputs, respectively, and let $\mc{U}:=\mc{U}_{\rob}\times\mc{U}_{\hum}$ be the joint control space. Note that we drop the explicit dependence on $t$ where obvious from context. 
 \begin{assumption}\label{a:u_conditions}
     The joint control space $\mc{U}$ is non-empty, closed, and convex set. Moreover,  there exists strictly positive, finite constants $b_u$ and $B_u$ such that $b_u\mb{B}\subseteq \mc{U}\subseteq B_u\mb{B}$ where $\mb{B}=\{u\in \mb{R}^m\mid \|u\|=1\}$ is the unit ball.  
 \end{assumption}
 This assumption implies the set $\mc{U}$ is compact with a non-empty interior; otherwise, we can map $\mc{U}$ to a lower dimensional space.
 Let $n:=n_{\rob}+n_{\hum}$ be the state dimension and $m:=m_{\rob}+m_{\hum}$ be the control input dimension. 
The dynamical system under consideration is given by
\begin{equation}
\begin{aligned}
      \dot{x} &=f(x)+Bu,
\end{aligned}
    \label{eq:continuous_joint_dynamics}
\end{equation}
where $B= \bmat{B_{\rob} & B_{\hum}}$. 
We make the following regularity assumption.
\begin{assumption}\label{a:lip_continuous_dynamics}
    The map  $f: \mc{X} \rightarrow \mb{R}^{n}$ is locally Lipschitz continuous in $x$ so that $x(t)$ is Lipschitz continuous.
\end{assumption}
Let  $\Lip_x\in[0,\infty)$ be the corresponding Lipschitz constant.

\subsection{Safety}
As noted the goal is to give a guarantee on the state remaining safe. To gain some intuition, let us consider a deterministic 
control-affine system: 
\begin{equation}
\dot{x} = f(x) + g(x)u
\label{eq:control_affine}
\end{equation}
where $x \in \mc{X} \subset \mathbb{R}^n$ is the state, and $u \in \mc \mc{U} \subset \mathbb{R}^m$ is the control input. 
Let $\mc{S} \subseteq \mc{X}$ denote the set of \emph{safe states}  for the system in \eqref{eq:control_affine}.
One way to characterize such a set is via the $0$-superlevel set of a mapping: 
\[\mc{S} := \{x \in \mc{X} \mid h(x) \geq 0\},\]
where $h(\cdot)$ is a continuously differentiable function. 

In this setting, the objective is to synthesize a feedback control policy $\pi: x \mapsto u$ ensuring forward invariance of $\mc{S}$ under closed-loop dynamics:
\[\dot{x}=f(x)+g(x)\pi(x).\]
That is, trajectories initialized in $\mc{S}$ at time $t_0$ must remain within $\mc{S}$ for all $t \geq t_0$. This is known as the safe planning problem. One common approach to solving this problem is to use CBFs \cite{Ames2019ControlBF}.

\begin{definition} 
The mapping $h(\cdot)$ is a CBF for the system \eqref{eq:control_affine} if there exists an extended class-$\mc{K}_{\infty}$\footnote{An extended class--$\mc{K}_{\infty}$ function $\mc{K} : \mb{R} \rightarrow \mb{R}$ is continuous, strictly increasing, satisfies $\mc{K}(0) = 0$, and $\mc{K}(r) \to \infty$ as $r \to \infty$.} function $\mc{K}$ such that
\begin{equation}
\sup_{u\in \mc{U}}\{L_f h(x) + L_g h(x)u + \mc{K}(h(x))\} \geq 0, \quad \forall x \in \mc{X},
\end{equation}
where $L_f h(x) = \nabla h(x)^\top f(x)$ and $L_g h(x) = \nabla h(x)^\top g(x)$.
\end{definition}
For a given CBF
$h(\cdot)$ and class-$\mc{K}_{\infty}$ function $\mc{K}$,  the set of admissible safe control actions is defined  as
\begin{equation}
C(x) := \{u \in \mc{U} \mid L_f h(x) + L_g h(x)u + \mc{K}(h(x)) \geq 0\}.
\label{eq:CBF_safe_control_set}
\end{equation}
Any locally Lipschitz continuous controller $\pi(x) \in C(x)$ guarantees forward invariance of the safe set $\mc{S}$ under the closed-loop dynamics. This motivates the design of a minimally-invasive safety filter:
\begin{align}
\pi(x) := \min_{u \in \mc{U}} \quad &  \|u - u_{\text{nom}}(x)\|_2^2 \label{eq:safety_filter_main}\tag{\tt CBF-QP}\\
\text{subject to} \quad & L_f h(x) + L_g h(x)u + \mc{K}(h(x)) \geq 0, \notag
\end{align}
where $u_{\text{nom}}(x)$ represents a control input from a potentially unsafe nominal controller. Given the control-affine structure of our system, $C(x)$ forms a polyhedral set, reducing \eqref{eq:safety_filter_main} to a convex quadratic program.

In the setting we consider, however, there are two additional practically motivated challenges: 
\begin{itemize}[itemsep=0pt,topsep=2pt,leftmargin=10pt]
    \item \textbf{Discretization}: Agent controllers generally operate in discrete time due to sampling. Therefore we must discretize the dynamics and synthesize a robust safe policy. 
    \item \textbf{Uncertainty}: The human's behavior is unknown \emph{a priori} to the robot. This means they do not have a deterministic model of the control input $u_{\hum}$, and must learn it from data. 
\end{itemize}

\begin{figure*}[t]
    \centering
    \begin{subfigure}[b]{0.34\textwidth}
        \centering
        \includegraphics[width=\textwidth]{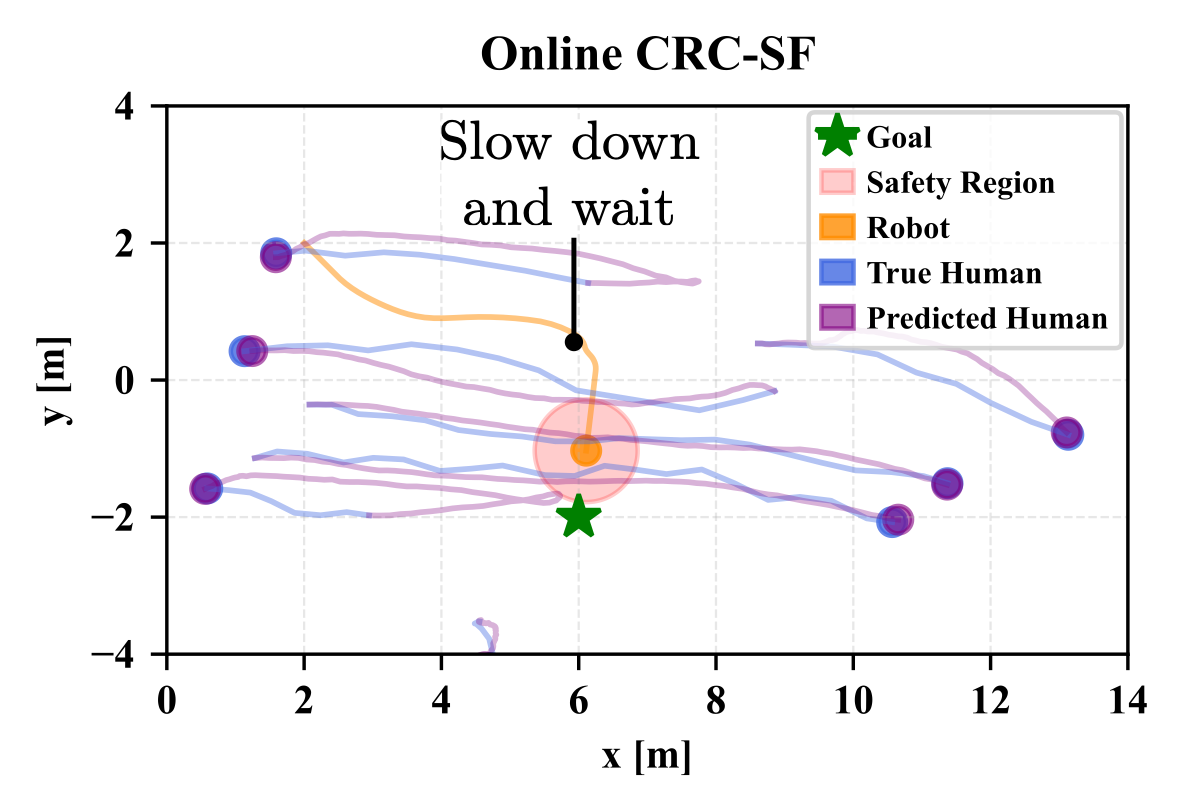}
    \end{subfigure}
    \hspace{-1.0em}
    \begin{subfigure}[b]{0.34\textwidth}
        \centering
        \includegraphics[width=\textwidth]{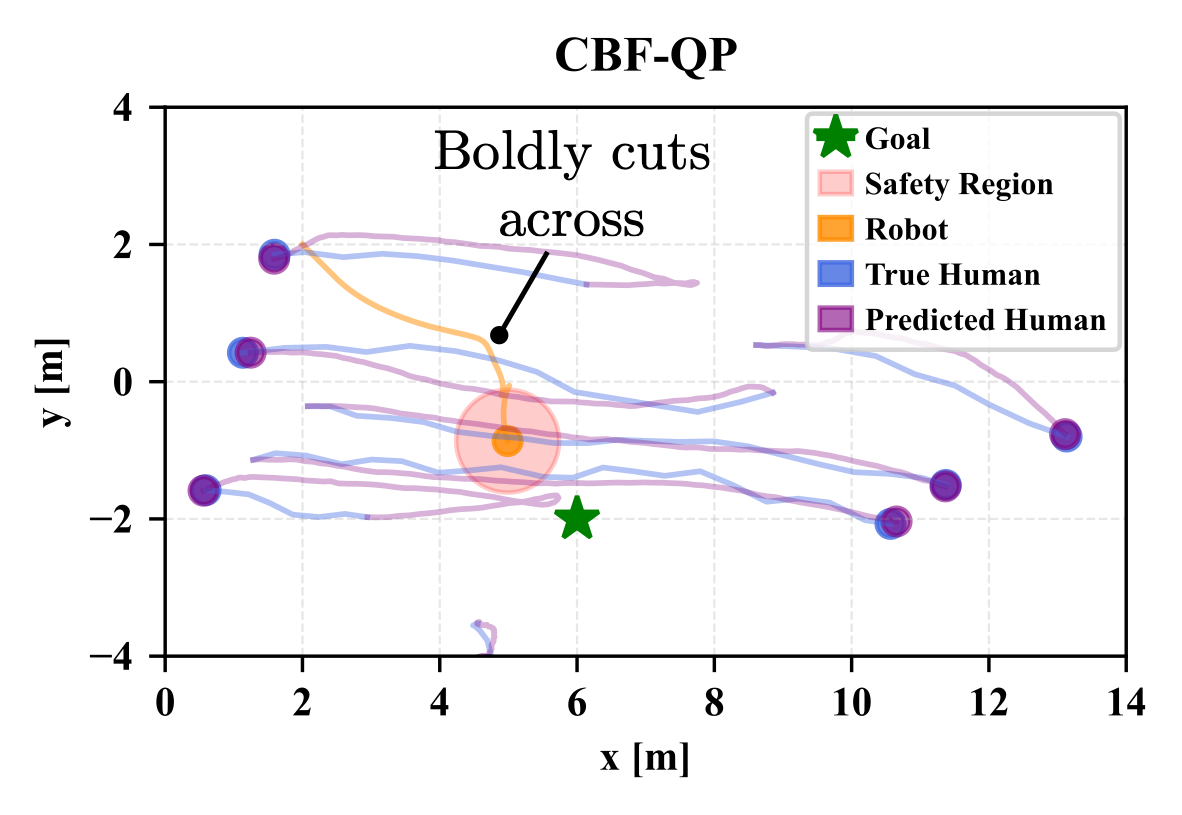}
    \end{subfigure}
    \hspace{-1.0em}
    \begin{subfigure}[b]{0.34\textwidth}
        \centering
        \includegraphics[width=\textwidth]{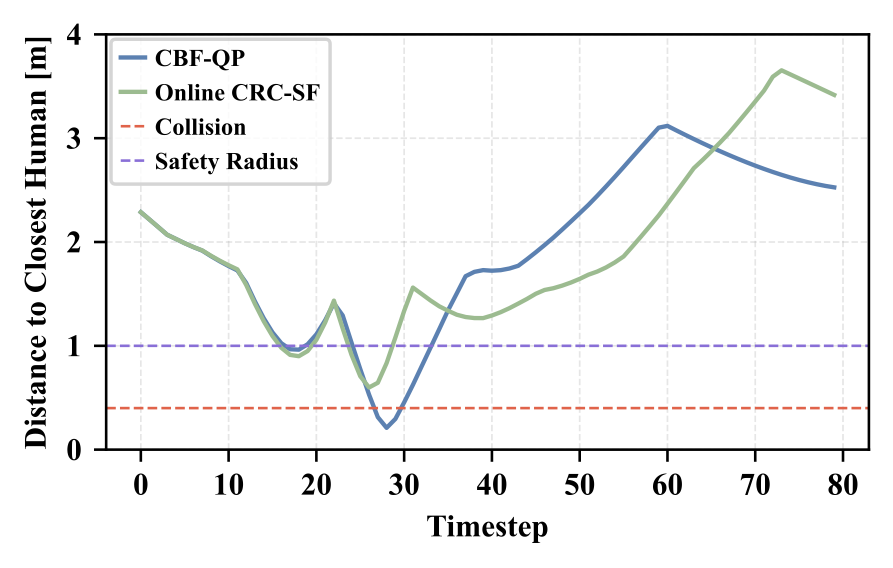}
    \end{subfigure}
    \hspace{-1.0em}
    \caption{ \small \textbf{Multi-Agent Scenario.} Example of the human crowd simulation setup. The CRC Safety Filter maintains the probabilistic safety guarantees while the standard CBF exhibits unsafe behavior. The safety margin $\lambda$ adapts based on human prediction uncertainty, ensuring larger distances are maintained during uncertain interactions.}
    \vspace{-1em}
    \label{fig:results_comparison}
\end{figure*}

\subsection{Discretization \& Robust Control Barrier Function}

 We discretize the continuous-time dynamics given in \eqref{eq:continuous_joint_dynamics}. 
Let $t_k = k \cdot \Delta T$ for $k \in \mb{N}$ denote sampling instants with fixed timestep $\Delta T \in \mb{R}_{>0}$. Under this time discretization, the controller operates using a zero-order hold control law: for each $k\in \mb{N}$, set
\begin{equation}
    u(t) = u_k, \quad \forall t \in [t_k, t_{k+1}).
    \label{eq:zoh}
\end{equation}
 Let $x_k:=(x_{\rob}(t_k),x_{\hum}(t_k))$ be the corresponding sampled state\footnote{Under a zero-order hold control law as in \eqref{eq:zoh} with compact set $\mc{U}$, uniqueness of the maximal closed-loop solution $x_k$ is guaranteed \cite{Sontag1998MathematicalCT}.}. Thus, discrete-time joint dynamics are give by
\begin{equation}
    x_{k+1} = f_d(x_{k}) + B_{\rob,d} u_{\rob,k} + B_{\hum,d} u_{\hum,k} :=F_d(x_k,u_k),
    \label{eq:joint_dynamics}
\end{equation}
where $B_{j,d}:=B_j\cdot \Delta T$ for $j\in\{\rob,\hum\}$. The discrete-time dynamics $f_d(x_k)$ are obtained through standard numerical integration of $f(x(t))$ over the time interval $[t_k, t_{k+1}]$ under the zero-order hold assumption on the control inputs. The discrete-time dynamics $f_d(x_k)$ are implicitly defined by this integral using  Gronwall's inequality and the Lipschitz continuity of $f$. 

To ensure safety despite the time discretization, we construct a \emph{robust CBF} \cite{pmlr-v155-dean21a} and corresponding safe controller set. We show first in the deterministic setting that the zero-order hold controller remains safe on every interval $[t_k,t_{k+1}]$ as long as $u_k\in \mc{C}(x_k)$ and $x_k\in \mc{S}$. 
To this end, we make the following regularity assumptions.
 \begin{assumption}\label{a:regularity} The mappings  $h\in C^1$  and $\mc{K} \circ h$  are  $\Lip_h$  and $\Lip_{\mc{K}\circ h}$ Lipschitz continuous, respectively. 
 \end{assumption}
Observe that $L_f h(x)$ and $L_gh(x)$ are, respectively,  $\Lip_h\Lip_f$ and $\Lip_h\Lip_g$ Lipschitz continuous under Assumption~\ref{a:lip_continuous_dynamics}. 

Now, define the robust safe control set---which we will use to select the control at each time $t_k$---as follows: for the current state and agent action $(x(t_k), u_{\hum}(t_k))$, the robust safe set for the robot is given by
\begin{equation*}
\begin{aligned}
  \widetilde{\mc{C}}(x(t_k), u_{\hum}(t_k)) &= \{u_{\rob} \in \mc{U}_{\rob} \mid  L_fh(x(t_k)) +L_gh(x(t_k))u \\&+ \mc{K}(h(x(t_k))) \geq \eta, \; u=(u_{\rob},u_{\hum}(t_k)) \}
\end{aligned}
\end{equation*}
where $\eta := \Delta T \cdot \Lip_x(\Lip_h\Lip_{g} R_u+\Lip_h\Lip_{f}+\Lip_{\mc{K}\circ h})$ is a robustness margin accounting for discretization.

\begin{restatable}[Robust CBF]{lemma}{robustcbflemma}
\label{lem:robust_cbf_lemma}
Consider the interval $[t_k,t_{k+1}]$ where $\Delta T=t_{k+1}-t_k$. Suppose that the agent is employing a zero-order hold policy, and that at each discrete time step $k$, the proposed algorithm selects $u_{\rob,k}\in \widetilde{\mc{C}}(x(t_k), u_{\hum}(t_k))$ for the current state and agent action. 
Further, suppose that the robot uses a fixed control input $u_{\rob}(t)\equiv u_{\rob, k}$ on $[t_k,t_{k+1}]$ and that the current state is safe---i.e., $x(t_k)\in \mc{S}$.  Then $x(t)\in \mc{S}$ for all $t\in [t_k,t_{k+1}]$. 
\end{restatable}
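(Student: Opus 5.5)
The plan is to show that the CBF condition, which holds with margin $\eta$ at the sampling instant $t_k$, still holds (with margin $0$) at every $t\in[t_k,t_{k+1}]$ under the frozen input $u_k=(u_{\rob,k},u_{\hum}(t_k))$. From that, forward invariance on the interval follows by a comparison argument. Define
\[
\psi(t):=L_fh(x(t))+L_gh(x(t))u_k+\mc{K}(h(x(t))),
\]
so that $\dot h(x(t))=\psi(t)-\mc{K}(h(x(t)))$ on $[t_k,t_{k+1})$, since $\dot x=f(x)+Bu_k$ there. By the choice $u_{\rob,k}\in\widetilde{\mc{C}}(x(t_k),u_{\hum}(t_k))$ we have $\psi(t_k)\ge\eta$.

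\textbf{Step 1: continuity estimate.} Bound $|\psi(t)-\psi(t_k)|$ by splitting it into three differences:
\[
|L_fh(x(t))-L_fh(x(t_k))|,\qquad |(L_gh(x(t))-L_gh(x(t_k)))u_k|,\qquad |\mc{K}(h(x(t)))-\mc{K}(h(x(t_k)))|.
\]
Using Assumption~\ref{a:regularity} and the observation following it, these are bounded respectively by
\[
\Lip_h\Lip_f\|x(t)-x(t_k)\|,\qquad \Lip_h\Lip_g R_u\|x(t)-x(t_k)\|,\qquad \Lip_{\mc{K}\circ h}\|x(t)-x(t_k)\|.
\]
In the second bound, $\|u_k\|\le R_u$ holds because $\mc{U}$ is bounded by Assumption~\ref{a:u_conditions}, identifying $R_u$ with $B_u$. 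Next, Assumption~\ref{a:lip_continuous_dynamics}, read as "the trajectory is $\Lip_x$-Lipschitz in time", gives $\|x(t)-x(t_k)\|\le \Lip_x(t-t_k)\le\Lip_x\Delta T$. Combining,
\[
|\psi(t)-\psi(t_k)|\le \Delta T\,\Lip_x\bigl(\Lip_h\Lip_gR_u+\Lip_h\Lip_f+\Lip_{\mc{K}\circ h}\bigr)=\eta,
\]
and hence $\psi(t)\ge\psi(t_k)-\eta\ge0$ for all $t\in[t_k,t_{k+1}]$.

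\textbf{Step 2: comparison.} Since $\psi(t)\ge 0$, we get $\dot h(x(t))\ge-\mc{K}(h(x(t)))$ on the interval, with $h(x(t_k))\ge0$. Compare $y(t):=h(x(t))$ with the solution of $\dot z=-\mc{K}(z)$, $z(t_k)=h(x(t_k))\ge 0$. Because $\mc{K}(0)=0$, the zero solution is an equilibrium, so $z$ stays nonnegative; the comparison lemma then gives $y\ge z\ge 0$. If $\mc{K}$ is not Lipschitz, so that uniqueness of $z$ may fail, I would instead argue directly. Suppose $y(t)<0$ for some $t$, and let $s$ be the last time before $t$ with $y(s)=0$. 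On $(s,t]$ we have $y<0$, so $-\mc{K}(y)>0$ and therefore $\dot y>0$. This contradicts $y$ decreasing from $0$ to a negative value. Continuity of $y$ follows from $h\in C^1$.

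\textbf{Main obstacle.} The hard part is Step 1's bookkeeping. The Lipschitz constants of $L_fh$ and $L_gh$ (the product forms $\Lip_h\Lip_f$, $\Lip_h\Lip_g$) implicitly require $\nabla h$ to be bounded by $\Lip_h$ and $f,g$ to be Lipschitz. For $\nabla h^\top f$, a rigorous product bound would also need Lipschitzness of $\nabla h$ and boundedness of $f$. I would follow the paper's stated observation and take these constants as given, with $g\equiv B$ here so that the $\Lip_g$ term is benign. A second care point is the human input: the set $\widetilde{\mc{C}}$ is evaluated at $u_{\hum}(t_k)$, so I would use the zero-order hold assumption that the agent input is also frozen on the interval, making $u_k$ constant and the frozen-input $\psi$ the true derivative driver.
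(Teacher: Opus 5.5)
Your proposal is correct and follows the paper's own route. You use a Lipschitz perturbation estimate to show that the margin $\eta$ at $t_k$ absorbs the drift of $L_fh(x(t))+L_gh(x(t))u_k+\mc{K}(h(x(t)))$ over $[t_k,t_{k+1}]$, so $u_k$ stays in $\mc{C}(x(t))$, and then invoke forward invariance. Two things are tidier in your version: you bound all three differences at once by $|\psi(t)-\psi(t_k)|\le\eta$, and you spell out the final invariance step, including the uniqueness-free contradiction argument, which the paper simply delegates to ``standard continuous time CBF arguments.''
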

The proof of the preceding lemma is given in Appendix~\ref{subsec:proof_robust_cbf}. Observe that this lemma provides a safety guarantee for the entire discretized time interval.

In the deterministic setting---i.e., where the human behavior policy is known to the robot---we could easily modify the optimization problem in \eqref{eq:safety_filter_main} by replacing the control barrier function constraint with the robust version defined above: namely,
\begin{equation}
\begin{aligned}
\pi(x(t_k)) := \min_{u_{\rob}} \quad &  \|u_{\rob} - u_{\text{nom}}(x(t_k))\|_2^2 \\
\text{subject to} \quad & u_{\rob} \in  \widetilde{\mc{C}}(x(t_k), u_{\hum}(t_k))
\end{aligned}\tag{\tt RCBF-QP}
\label{eq:safety_filter_main_robust}
\end{equation}
However, in practice, such knowledge would not be available to the robot. 
\subsection{Stochastic Human Behavior with Probabilistic Safety}
\label{subsec:human_behavior_model}
Now that we have tackled the time discretization, we can deal with the second challenge on the stochasticity of the human policy. 
As noted above, the human's policy is unknown \emph{a priori}.
Indeed, since the human's actions are stochastic, we cannot optimize \eqref{eq:safety_filter_main_robust} since the robot does not have $u_{\hum}(t_k)$. Instead, we aim to ensure this condition holds with high probability. 

The following lemma establishes a probabilistic safety guarantee by imposing constraints on the robots control actions given a stochastic policy $P(u_{\hum}(t_k)|\ x_{0:k})$ for the human agents, where $x_{0:k}:=(x(t_0), \ldots, x(t_k))$.  
\begin{lemma}
\label{lem:prob_safety_guarantee}
Suppose the joint control space $\mc{U}$ satisfies Assumption \ref{a:u_conditions},  the system \eqref{eq:joint_dynamics} satisfies Assumptions~\ref{a:lip_continuous_dynamics} and \ref{a:regularity}, and both the robot and humans employ zero-order hold control policies.  
Fix a confidence level $\gamma \in (0,1)$.
Suppose $x(t_k) \in \mc{S}$ and the robot selects $u_{\rob}(t_k)$ such that 
\begin{equation}
\Pr(u_{\rob}(t_k) \in \widetilde{\mc{C}}(x(t_k), u_{\hum}(t_k))) \geq 1 - \gamma,
\end{equation}
where the probability is taken with respect to the distribution of human actions $u_{\hum}(t_k)\sim P(u_{\hum}(t_k)|\ x_{0:k})$. Then, the joint state remains safe for all $t\in[t_k,t_{k+1}]$---i.e., 
\begin{equation}
\Pr\left(\{x(t) \in \mc{S}\;\; \forall t \in [t_k, t_{k+1}]\}\right) \geq 1 - \gamma.
\end{equation}
\end{lemma}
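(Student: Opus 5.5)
The argument is an event-inclusion (monotonicity of probability) reduction to Lemma~\ref{lem:robust_cbf_lemma}. Fix the history $x_{0:k}$; since $x(t_k)$ is determined by it, we may condition on it and treat $x(t_k)\in\mc{S}$ as given. The robot's input $u_{\rob}(t_k)$ is chosen from the history, so it is either deterministic or independent of the human's draw given $x_{0:k}$. The only randomness on $[t_k,t_{k+1}]$ is therefore $u_{\hum}(t_k)\sim P(u_{\hum}(t_k)\mid x_{0:k})$, held constant on the interval by the zero-order hold. Define the events
\[
E := \{u_{\rob}(t_k) \in \widetilde{\mc{C}}(x(t_k), u_{\hum}(t_k))\},\qquad
F := \{x(t)\in\mc{S}\ \forall t\in[t_k,t_{k+1}]\}.
\]
The goal is to show $E\subseteq F$ up to a null set. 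Monotonicity then gives $\Pr(F)\ge\Pr(E)\ge 1-\gamma$.

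To establish the inclusion, take any realization $\omega\in E$ with human action $\bar u_{\hum}=u_{\hum}(t_k)(\omega)$. Along this realization the closed-loop trajectory on $[t_k,t_{k+1}]$ is the deterministic solution of \eqref{eq:continuous_joint_dynamics} with constant input $(u_{\rob,k},\bar u_{\hum})$. The footnote on uniqueness under zero-order hold with compact $\mc{U}$ (Assumption~\ref{a:u_conditions}) guarantees that this solution is unique and well defined. This realization therefore satisfies every hypothesis of Lemma~\ref{lem:robust_cbf_lemma}:
\begin{itemize}
\item both agents use zero-order hold;
\item $x(t_k)\in\mc{S}$;
\item $u_{\rob,k}\in\widetilde{\mc{C}}(x(t_k),\bar u_{\hum})$;
\item Assumptions~\ref{a:lip_continuous_dynamics} and \ref{a:regularity} hold, so the margin $\eta$ is valid. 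Its bound $R_u$ on the joint input is supplied by $\mc{U}\subseteq B_u\mb{B}$.
\end{itemize}
Lemma~\ref{lem:robust_cbf_lemma} then yields $x(t)\in\mc{S}$ for all $t\in[t_k,t_{k+1}]$, so $\omega\in F$.

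The main obstacle is measurability rather than the inequality: $F$ must be an event for $\Pr(F)$ to make sense. I would handle this in two steps. First, the map from $u_{\hum}(t_k)$ to the trajectory is continuous, by Lipschitz dependence of ODE solutions on constant inputs (Gronwall). Second, $\mc{S}$ is closed, since $h$ is continuous. Hence $F=\bigcap_{t\in[t_k,t_{k+1}]\cap\mathbb{Q}}\{h(x(t))\ge 0\}$, a countable intersection of measurable sets, where restricting to rational times is justified by continuity of $t\mapsto h(x(t))$. Similarly, $E$ is measurable because the constraint defining $\widetilde{\mc{C}}$ is affine in $u$. I also need to note that the uniform constants in $\eta$ do not depend on the realized human action. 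Finally, if one wants an unconditional statement, integrating the conditional bound over $x_{0:k}$ on the event $\{x(t_k)\in\mc{S}\}$ preserves the $1-\gamma$ lower bound.
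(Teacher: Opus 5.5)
Your proposal is correct and follows the paper's own argument. The paper simply notes that the lemma follows from Lemma~\ref{lem:robust_cbf_lemma}: realization-wise, $u_{\rob}(t_k)\in\widetilde{\mc{C}}(x(t_k),u_{\hum}(t_k))$ together with $x(t_k)\in\mc{S}$ forces $x(t)\in\mc{S}$ on $[t_k,t_{k+1}]$, which is exactly your inclusion $E\subseteq F$ followed by monotonicity of probability. Your conditioning on $x_{0:k}$ and your measurability argument for $E$ and $F$ are details the paper leaves implicit, and you handle them soundly.
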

This lemma trivially follows from Lemma~\ref{lem:robust_cbf_lemma} which implies that if $u_{\rob}(t_k) \in \widetilde{\mc{C}}(x(t_k), u_{\hum}(t_k))$ and $x(t_k) \in \mc{S}$, then $x(t) \in \mc{S}$ for all $t \in [t_k, t_{k+1}]$.

Given the safety guarantee implied by Lemma~\ref{lem:prob_safety_guarantee}, we modify the \eqref{eq:safety_filter_main_robust} to incorporate this high probability guarantee: namely, 
\begin{align}
\pi(x(t_k)) := \min_{u_{\rob}} \quad &  \|u_{\rob} - u_{\text{nom}}(x(t_k))\|_2^2 \tag{\tt SRCBF-QP}
\label{eq:safety_filter_main_robust_safe}\\
\text{subject to} \quad & \Pr(u_{\rob}(t_k) \in \widetilde{\mc{C}}(x(t_k), u_{\hum}(t_k))) \geq 1 - \gamma\notag
\end{align}
This optimization problem now has a chance constraint. 

\section{Robust and Safe Control Policy Synthesis}
\label{sec:proposed_solution}
In this section, we describe a method to address the probabilistic safety constraint in \eqref{eq:safety_filter_main_robust_safe}. We reformulate the CBF condition using CRC to quantify the risk associated with the uncertainty in human's stochastic policy. 

\subsection{Safety Barrier Certificates}
Following \cite{LiSafetyBarrierCertificates2017}, we define two safety barrier certificates, the deterministic $\mc{B}(x_k, u_k)$ using ground truth states and the predicted $\mc{\hat{B}}(\hat{x}_k, \hat{u}_k)$ using estimated states:
\begin{equation}
\begin{aligned}
\mc{B}(x_k, u_k) &= L_f h(x_k) + L_g h(x_k)u_k + \mc{K}(h(x_k)) - \eta, \\
\mc{\hat{B}}(\hat{x}_k, \hat{u}_k) &= L_f h(\hat{x}_k) + L_g h(\hat{x}_k)\hat{u}_k + \mc{K}(h(\hat{x}_k)) - \eta,
\end{aligned}
\label{eq:barrier_certificates}
\end{equation}
where $u_k = (u_{\rob,k}, u_{\hum,k})$ and $\hat{u}_k = (u_{\rob,k}, \hat{u}_{\hum,k})$. For safety, we require $\mc{B}(x_k, u_k) \geq 0$, but can only compute $\mc{\hat{B}}(\hat{x}_k, \hat{u}_k)$ during planning.

\begin{assumption}[Bounded barrier prediction error]
\label{a:bounded_barriers}
There exists a finite constant $\bdd \in \mathbb{R}_{>0}$ such that, for all time steps $k \in \mathbb{N}$, the difference between the true barrier certificate $\mc{B}(x_k, u_k)$ and the predicted barrier certificate $\mc{\hat{B}}(\hat{x}_k, \hat{u}_k)$ is bounded
$\sup |\mc{B}(x_k, u_k) - \mc{\hat{B}}(\hat{x}_k, \hat{u}_k)| \leq \bdd$. This bound holds for all joint states at each timestep $k$.
\end{assumption}
\begin{remark}
    While this assumption is used in the theoretical analysis to establish statistically valid guarantees, the proposed method does not require explicit knowledge of this bound in practice since CRC implicitly learns the appropriate margins from data. 
\end{remark}

\subsection{Safety Margin Synthesis via Conformal Risk Control}
To address the chance constraint in \eqref{eq:safety_filter_main_robust_safe},
we leverage nonexchangeable CRC \cite{farinhas2024nonexchangeable}, a statistical verification technique that provides a flexible framework for quantifying prediction errors in dynamic settings. Theoretical guarantees for classical conformal uncertainty quantification methods hold under the assumption that the data is exchangeable\footnote{Exchangeability is a slight relaxation of the well-known condition of independent and identically distributed data.} \cite{Vovk}. Data from controlled dynamical systems is not exchangeable since the current state depends on the past state and control inputs.  Non-exchangeable CRC accounts for these dependencies by essentially reweighting the conformal prediction sets. 
For brevity, we will refer to nonexchangeable CRC simply as \emph{CRC} from here on out. 

To produce safety guarantees using the estimated barrier certificate, we need to bound the estimation error. The procedure we develop to do this uses a \emph{safety margin} parameter that is selected based on the historical estimates of the safety barrier certificates as we describe below.

Let $\lambda \in \Lambda \subseteq \mb{R}_{>0}$ be a \emph{safety margin} parameter for constructing $C_\lambda$ that characterizes the set of safe robot control inputs under uncertainty: 
\begin{equation}
    C_{\lambda}(\hat{x}_k, \hat{u}_{\hum, k}) = \{u_{\rob, k} \in \mc{U}_\rob \mid \mc{\hat{B}}(\hat{x}_k, \hat{u}_k) - \lambda_k \geq 0\}.
    \label{eq:prediction_set}
\end{equation}

For any $\lambda, \lambda' \in \Lambda$, we have $\lambda \leq \lambda' \implies C_{\lambda'}(\cdot) \subseteq C_{\lambda}(\cdot)$, \textit{i.e.,} larger $\lambda$ yields a more restrictive control space. The safety margin essentially creates a buffer that accounts for prediction uncertainty: as uncertainty increases, $\lambda$ increases to ensure safety by further restricting the set of permissible controls, directly capturing the risk associated with the human's stochastic policy.

\subsubsection{Conformal Risk Control Safety Filter}
Now, we present a risk-aware safety filter that enforces probabilistic safety constraints through a quadratic program formulation. At each discrete timestep $k$, the CRC safety filter computes a control input that minimally deviates from a nominal robot policy while ensuring safety with high probability:
\begin{align}
\pi(x(t_k)) := \min_{u_{\rob}} \quad &  \|u_{\rob} - u_{\text{nom}}(x(t_k))\|_2^2 \tag{\tt CRC-SF}\label{eq:crc_safety_filter}\\
\text{subject to} \quad & \Pr(u_{\rob}(t_k) \in C_{\lambda}(\hat{x}_k, \hat{u}_{\hum, k})) \geq 1 - \gamma\notag
\end{align}
where $u_{\text{nom}}(x(t_k))$ is the nominal robot control and $C_{\lambda}(\hat{x}_k, \hat{u}_{\hum,k})$ is the set of safe robot controls parameterized by safety margin $\lambda$. We develop a method to update $\lambda$ online based on interaction context (Section \ref{sec:alg_details}). This formulation provides single-timestep safety guarantees under human behavioral uncertainty (Theorem~\ref{thm:crc_cbf}).

\subsubsection{Optimizing the Safety Margin Parameter}
To select the safety margin parameter, we define a loss function  $\mc{L} : \Lambda \rightarrow [0, B]$ that quantifies the barrier prediction error: 
\begin{equation}
\begin{aligned}
    \mc{L}(\lambda) = \max\{0, |\mc{B}(x_k, u_k) - \mc{\hat{B}}(\hat{x}_k, \hat{u}_k)| - \lambda \}.
    \label{eq:crc_loss}
\end{aligned}
\end{equation}
This loss function is bounded under Assumption~\ref{a:bounded_barriers}; this is a technical condition that is needed for the theoretical analysis of the CRC method. Further, this loss function quantifies the amount by which the prediction error exceeds the safety margin $\lambda$, and satisfies the key properties required by the CRC framework: (i) non-negativity, (ii) boundedness by Assumption~\ref{a:bounded_barriers}, and (iii) monotonically non-increasing with respect to $\lambda$.

To account for temporal dependencies in human-robot interaction, we apply nonexchangeable CRC using geometrically decaying weights. The optimal safety margin $\hat{\lambda}_k$ is chosen as the smallest value that ensures the weighted risk remains below $\alpha$ with an appropriate correction term:
\begin{equation}
\hat{\lambda}_k = \inf\left\{\lambda : \frac{1}{n_w + 1}\sum_{i=1}^{n_k} w_i \mc{L}_i(\lambda) + \frac{B}{n_w + 1} \leq \alpha\right\},
\label{eq:optimal_lambda}
\end{equation}
where $w_i = \rho^{n_k+1-i}$ are geometrically decaying weights with $\rho \in (0,1)$, $n_k$ is the size of the calibration set (detailed in Section~\ref{sec:alg_details}), and $n_w = \sum_{i=1}^{n_k} w_i$. This formulation ensures
\begin{equation}
   \mathbb{E}[\mc{L}(\hat{\lambda}_k)] \leq \alpha + \beta,
   \label{eq:nonx_crc_guarantee}
\end{equation}
where $\alpha \in (0,1)$ is a user-specified risk level. Here $\beta$ represents the \emph{total variation distance} that quantifies the degree of non-exchangeability in the data. As discussed in \cite{farinhas2024nonexchangeable}, in practice $\beta$ is difficult to compute, but with a proper choice of weights and gradual distribution shift such as that in time series data, $\beta$ is likely to be small.

\begin{algorithm}[t]
\caption{Offline Safety Margin Calibration using CRC}
\label{alg:compute_lambda}
\begin{algorithmic}[1]
\State \textbf{Input:} Risk threshold $\alpha$, human policy $P(u_{\hum}|\ x_{0:k})$, nominal robot policy $\pi_{\text{nom}}$, discount factor $\rho$, trajectories per batch $K$, total trajectories $M$, horizon $N$
\State \textbf{Initialize:} Training dataset $\mc{T} \leftarrow \emptyset$ 
\For{$b = 1$ to $M/K$} \Comment{Each batch uses $K$ trajectories to compute one set of $\lambda$ values}
   \State $\mc{D}_b \leftarrow \emptyset$ \Comment{Calibration data for batch $b$}
\For{$j = 1$ to $K$} \Comment{Interaction trajectories}
    \State Initialize states: $x_{\rob,0}, x_{\hum,0}$
    \For{$k = 0$ to $N-1$} \Comment{Data Collection}
        \State Sample human action: $\hat{u}_{\hum,k} \sim P(u_{\hum}|\ x_{0:k})$ 
        \State Set $u_{\rob,k} = \pi_\rob(x_k, \hat{u}_{\hum,k})$ solving \eqref{eq:safety_filter_main_robust}
        \State Update states: $x_{k+1} = F_d(x_k, u_k)$ via \eqref{eq:joint_dynamics}
        \State Compute barrier certificates as in \eqref{eq:barrier_certificates}
        \State $\mc{D}_b \leftarrow \mc{D}_b \cup \{(\mc{B}_k, \mc{\hat{B}}_k)\}$
    \EndFor
\EndFor
   \For{$k = 0$ to $N-1$} \Comment{Risk Calibration}
       \State $\mc{D}_{b,k} \leftarrow \{(\mc{B}_i, \mc{\hat{B}}_i, k_i) \in \mc{D}_b \mid k \leq k_i \leq N-1\}$ 
       \State $n_k \leftarrow |\mc{D}_{b,k}| = K \times (N-k)$ 
       \State $w_i \leftarrow \rho^{n_k+1-i}$ for $i = 1,\ldots,n_k$
       \State Compute $\hat{\lambda}_{b,k}$ using \eqref{eq:optimal_lambda}
       \State Extract feature vector $\phi_{b,k}$ for each $K$ trajectories
       \State $\mc{T} \leftarrow \mc{T} \cup \{(\phi_{b,k}, \hat{\lambda}_{b,k})\}$
   \EndFor
\EndFor
\State \Return $\mc{T}$
\end{algorithmic}
\end{algorithm}

\subsection{High Probability Safety Guarantees}
This formulation leads to the main theoretical results. First, we establish that CRC guarantees on the barrier prediction error translate to probabilistic bounds on the difference between true and predicted barrier values.

\begin{restatable}[Barrier Value Concentration]{lemma}{barriervalueconcentration}
\label{lem:crc_concentration}
Given a risk threshold $\alpha \in (0,1)$ and confidence level $\gamma \in (0,1)$, if we compute $\hat{\lambda}$ using non-exchangeable CRC to satisfy $\mathbb{E}[\mc{L}(\hat{\lambda})] \leq \alpha + \beta$, 
then setting $\epsilon = (\alpha + \beta)/\gamma$ gives
\begin{equation}
   \Pr(|\mc{B}(x_k, u_k) - \mc{\hat{B}}(\hat{x}_k, \hat{u}_k)| \leq \hat{\lambda}_k + \epsilon) \geq 1 - \gamma.
\end{equation}
\end{restatable}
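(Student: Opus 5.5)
The argument is a direct application of Markov's inequality to the nonnegative CRC loss \eqref{eq:crc_loss}, evaluated at the calibrated margin $\hat{\lambda}_k$. Write $E_k := |\mc{B}(x_k, u_k) - \mc{\hat{B}}(\hat{x}_k, \hat{u}_k)|$ for the barrier prediction error at step $k$. By definition, $\mc{L}(\hat{\lambda}_k) = \max\{0, E_k - \hat{\lambda}_k\}$.

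The key observation is a pointwise event inclusion. If $E_k > \hat{\lambda}_k + \epsilon$, then $E_k - \hat{\lambda}_k > \epsilon > 0$, so $\mc{L}(\hat{\lambda}_k) = E_k - \hat{\lambda}_k > \epsilon$. Conversely, if $\mc{L}(\hat{\lambda}_k) > \epsilon$ with $\epsilon > 0$, the maximum must be attained by the second term, and so $E_k > \hat{\lambda}_k + \epsilon$. Hence
\[
\{E_k > \hat{\lambda}_k + \epsilon\} = \{\mc{L}(\hat{\lambda}_k) > \epsilon\}.
\]
Here $\epsilon = (\alpha+\beta)/\gamma > 0$, since $\alpha > 0$, $\beta \ge 0$ (it is a total variation distance), and $\gamma \in (0,1)$.

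Next I apply Markov's inequality to the nonnegative random variable $\mc{L}(\hat{\lambda}_k)$. Assumption~\ref{a:bounded_barriers} makes it bounded by $B$, so it is integrable. This gives
\[
\Pr(\mc{L}(\hat{\lambda}_k) > \epsilon) \le \frac{\mathbb{E}[\mc{L}(\hat{\lambda}_k)]}{\epsilon} \le \frac{\alpha+\beta}{\epsilon} = \gamma,
\]
where the second inequality is the nonexchangeable CRC guarantee \eqref{eq:nonx_crc_guarantee}. Taking complements, $\Pr(E_k \le \hat{\lambda}_k + \epsilon) \ge 1-\gamma$, which is the claim.

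The main subtlety is not the calculation but the probability space. The expectation in \eqref{eq:nonx_crc_guarantee} is joint over the calibration data, which determine $\hat{\lambda}_k$, and the test pair $(\mc{B}_k,\mc{\hat{B}}_k)$. So the resulting probability statement is likewise marginal over both, rather than conditional on a fixed calibration set. I will state this explicitly and apply Markov to the same joint law, so that the bound holds in exactly the sense of the CRC guarantee. I will also note that $\hat{\lambda}_k$ is well defined, with the infimum attained, by monotonicity and right-continuity of $\mc{L}$ in $\lambda$. The meaningful regime is $\alpha+\beta < \gamma B$, but the inequality holds regardless.
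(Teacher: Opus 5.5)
Your proposal is correct and follows the paper's proof essentially step for step: the event identity $\{E_k > \hat{\lambda}_k + \epsilon\} = \{\mathcal{L}(\hat{\lambda}_k) > \epsilon\}$, then Markov's inequality combined with the CRC bound $\mathbb{E}[\mathcal{L}(\hat{\lambda}_k)] \le \alpha+\beta$, the choice $\epsilon = (\alpha+\beta)/\gamma$, and complementation. Your explicit checks that $\epsilon > 0$ and that the probability is marginal over calibration and test data are sound details the paper leaves implicit; your claim that the infimum defining $\hat{\lambda}_k$ is attained is not needed, since the lemma assumes the CRC bound, and it can fail because the feasible set is empty whenever $B/(n_w+1) > \alpha$.
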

The proof of Lemma~\ref{lem:crc_concentration} is contained in Appendix~\ref{subsec:proof_crc_concentration}.
Building on this result, we establish our safety guarantee.
\begin{restatable}[CRC-CBF Safety Guarantee]{theorem}{crccbfsafety}
\label{thm:crc_cbf}
Consider the human-robot system \eqref{eq:joint_dynamics} with barrier certificates defined in \eqref{eq:barrier_certificates}. Given a confidence level $\gamma \in (0,1)$ and risk threshold $\alpha \in (0,1)$, if we have the CRC guarantee such that $\hat{\lambda}$ satisfies $\mathbb{E}[\mc{L}(\hat{\lambda})] \leq \alpha + \beta$ and set $\epsilon = (\alpha + \beta)/\gamma$, then the prediction set defining the safe set of control inputs under uncertainty,
\begin{equation}
C_{\lambda}(\hat{x}_k, \hat{u}_{\hum, k}) = \{u_\rob \in \mc{U}_\rob \mid \mc{\hat{B}}(\hat{x}_k, \hat{u}_k) - (\hat{\lambda}_k + \epsilon) \geq 0\},
\label{eq:CBF prediction bound}
\end{equation}
ensures that $\Pr(h(x_{k+1}) \geq 0) \geq 1 - \gamma$ holds.
\end{restatable}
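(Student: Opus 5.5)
The argument chains Lemma~\ref{lem:crc_concentration}, a deterministic sandwich inequality, and Lemma~\ref{lem:robust_cbf_lemma}.

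\textbf{Step 1 (high-probability event).} Let $E_k$ be the event
\[
E_k := \{\,|\mc{B}(x_k,u_k)-\mc{\hat{B}}(\hat{x}_k,\hat{u}_k)| \le \hat{\lambda}_k+\epsilon\,\}.
\]
Because $\hat{\lambda}$ satisfies $\mathbb{E}[\mc{L}(\hat{\lambda})]\le\alpha+\beta$ and $\epsilon=(\alpha+\beta)/\gamma$, Lemma~\ref{lem:crc_concentration} gives $\Pr(E_k)\ge 1-\gamma$. Here the probability is over the human action $u_{\hum,k}\sim P(\cdot\mid x_{0:k})$ and the calibration data.

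\textbf{Step 2 (sandwich on $E_k$).} Suppose the robot selects $u_{\rob,k}\in C_\lambda(\hat{x}_k,\hat{u}_{\hum,k})$ as defined in \eqref{eq:CBF prediction bound}, so that $\mc{\hat{B}}(\hat{x}_k,\hat{u}_k)\ge\hat{\lambda}_k+\epsilon$. On $E_k$,
\[
\mc{B}(x_k,u_k)\ \ge\ \mc{\hat{B}}(\hat{x}_k,\hat{u}_k)-(\hat{\lambda}_k+\epsilon)\ \ge\ 0 .
\]
By \eqref{eq:barrier_certificates}, $\mc{B}(x_k,u_k)\ge 0$ is exactly the statement that $u_{\rob,k}\in\widetilde{\mc{C}}(x(t_k),u_{\hum}(t_k))$ for the true human action. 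This gives the inclusion of events
\[
E_k\cap\{u_{\rob,k}\in C_\lambda\}\ \subseteq\ \{u_{\rob,k}\in\widetilde{\mc{C}}(x(t_k),u_{\hum}(t_k))\}.
\]
The constraint $u_{\rob,k}\in C_\lambda$ is deterministic given the prediction, so it holds surely. Hence $\Pr(u_{\rob,k}\in\widetilde{\mc{C}})\ge 1-\gamma$.

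\textbf{Step 3 (conclude safety).} Assume, as in Lemma~\ref{lem:prob_safety_guarantee}, that $x(t_k)\in\mc{S}$ and that both agents use zero-order hold. Then Lemma~\ref{lem:prob_safety_guarantee}, or directly Lemma~\ref{lem:robust_cbf_lemma}, yields $\Pr(x(t)\in\mc{S}\ \forall t\in[t_k,t_{k+1}])\ge 1-\gamma$. In particular, $x(t_{k+1})=x_{k+1}$ lies in $\mc{S}$ with probability at least $1-\gamma$, i.e., $\Pr(h(x_{k+1})\ge 0)\ge 1-\gamma$. The interval statement is stronger, so we specialize it to the endpoint.

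\textbf{Main obstacle.} The delicate part is the probabilistic bookkeeping in Step 1, not the algebra in Step 2. Lemma~\ref{lem:crc_concentration} is a Markov-type bound on a marginal expectation over the calibration data and the test point. The theorem therefore only holds marginally in that joint sense, not conditionally on $x_{0:k}$, and the proof should say so.

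A second point is that $C_\lambda$ is required to be nonempty. Otherwise the filter is infeasible, and the statement is vacuous or must be read as conditional on feasibility. I would add a remark that feasibility is assumed, for instance when $\mc{\hat{B}}$ admits some $u_\rob$ with margin $\hat{\lambda}_k+\epsilon$ under Assumption~\ref{a:u_conditions}.

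Finally, $\hat{\lambda}$ is computed from data that may include the current step's predictions. I would take $\hat{\lambda}_k$ to be measurable with respect to the calibration data, so that the event inclusion in Step 2 is pathwise and no independence is needed beyond what the CRC guarantee already provides.
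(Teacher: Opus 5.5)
Your proof is correct and follows the paper's argument essentially step for step. You invoke Lemma~\ref{lem:crc_concentration} for the event $|\mc{B}_k-\mc{\hat{B}}_k|\le\hat{\lambda}_k+\epsilon$, use its lower half together with the margin built into $C_\lambda$ to get $\mc{B}_k\ge 0$, and pass through Lemma~\ref{lem:prob_safety_guarantee} (or Lemma~\ref{lem:robust_cbf_lemma}) to conclude $h(x_{k+1})\ge 0$ on that event; your explicit hypothesis $x(t_k)\in\mc{S}$ and your caveats about marginal validity and feasibility of $C_\lambda$ spell out assumptions the paper leaves implicit, rather than taking a different route.
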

The proof of the above theorem can be found in Appendix~\ref{subsec:proof_crc_cbf}. This result shows that by bounding prediction errors on barrier functions using CRC, we can construct control constraints that guarantee safety with high probability.


\section{Algorithmic Details}
\label{sec:alg_details}
In this section, we present the algorithmic details for implementing the proposed probabilistic safety framework. We begin with a brief high-level overview of the procedure. The proposed method first trains a stochastic human behavioral policy $P(u_{\hum}|\ x_{0:k})$ on real-world pedestrian data. The interaction history $x_{0:k} = \{x_0, x_1, \ldots, x_k \}$ captures the joint state evolution of all agents in the scene, enabling the human policy to account for multi-agent dynamics through observed interactions. While our theoretical framework supports conditioning on robot control actions---through the development of robot-aware human behavior models as in \cite{Schmerling2017MultimodalPM}---our current implementation is trained on pedestrian datasets where robot actions are not present \cite{PellegriniEssEtAl2009}. Specifically, our human behavioral model takes as input the positions and velocities of all agents in the scene over the interaction history and outputs a sequence of predicted human controls, following approaches similar to \cite{SalzmannIvanovicEtAl2020}. The learned human policy captures relative spatial information (e.g., distance to nearest agents, relative velocities) that enables generalization from the pedestrian-only training data to human-robot interaction scenarios in our simulated environments.

The proposed method requires the specification of an \emph{a priori} nominal robot control policy $\pi_\text{nom}$ that governs robot behavior during the offline calibration phase. The nominal policy can be any reasonable control strategy chosen by the user---for instance, a goal-reaching controller or any existing deterministic safe control method. The nominal policy serves as the robot's baseline behavior around which we calibrate our uncertainty quantification and safety margin adaptation.

In an \emph{offline phase}, we use the learned human policy together with the nominal robot policy to generate interaction trajectories $\tau = \{ (x_k, u_k) \}_{k=0}^{N-1}$ that capture the joint evolution of human-robot state-action pairs over a finite horizon $N$.
For each trajectory, we compute both the true barrier certificate values $\mc{B}_k$ using ground truth human actions and the predicted values $\mc{\hat{B}}_k$ using actions sampled from our behavior model.
Then, we apply CRC to the collected barrier certificate pairs $\{ (\mc{B}_k, \mc{\hat{B}}_k ) \}$ to compute the optimal safety margins $\{ \hat{\lambda} \}_{k=0}^{N-1}$. We then train a predictive model that maps the current context in the scene to the appropriate safety margins, with details outlined later in this section. In the \emph{online phase}, we use this model to dynamically adapt $\lambda$, enabling risk-adaptive control.

\begin{algorithm}[t]
\caption{CRC Safety Filter with Online Updates}
\label{alg:crc_sf}
\begin{algorithmic}[1]
\State \textbf{Input:} Initial states $(x_{\rob,0}, x_{\hum,0})$, stochastic human policy $P(u_{\hum}|\ x_{0:k})$, LSTM model $\Phi$, timestep $\Delta T$, horizon $N$
\State \textbf{Output:} Safe robot trajectory $\{x_{\rob,k}\}_{k=0}^N$
\State Sample human trajectories: $\{\hat{u}_{\hum,k}\}_{k=0}^N \sim P(u_{\hum}|\ x_{0:k})$
\State Compute predicted human states $\{\hat{x}_{\hum,k}\}_{k=0}^N$ by integrating human dynamics
\For{$k = 0$ to $N-1$}
    \State Compute $\hat{\mc{B}}_k$ using $(\hat{x}_{\hum,k}, \hat{u}_{\hum,k})$ via \eqref{eq:barrier_certificates}
    \State Extract feature vector $\phi_k$ 
    \State Update safety margin: $\hat{\lambda}_k = \Phi(\phi_k)$
    \State Solve \eqref{eq:crc_safety_filter} to obtain $u_{\rob,k}$
    \State Apply control $u_{\rob,k}$ to the robot system
\EndFor
\State \Return $\{x_{\rob,k}\}_{k=0}^N$
\end{algorithmic}
\end{algorithm}

\noindent \textbf{Computing Optimal Safety Margin.}
We now detail the offline CRC calibration procedure for computing optimal safety margins as outlined in Algorithm~\ref{alg:compute_lambda}. We adopt the geometric weight decay approach from \cite{farinhas2024nonexchangeable} to handle temporal dependencies through re-weighting the loss function, ensuring statistically valid risk control guarantees.

We generate $M$ interaction trajectories by simulating the stochastic human policy with a nominal robot policy $\pi_{\text{nom}}$. In our implementation, $\pi_{\text{nom}}$ solves the robust CBF optimization problem \eqref{eq:safety_filter_main_robust} where $u_{\text{nom}}(x_k)$ is computed using a goal-reaching control Lyapunov function. We process the $M$ trajectories in batches of size $K$, where each batch computes safety margins $\{ \hat{\lambda}_{b,k} \}_{k=0}^{N-1}$. Each safety margin is computed by collecting all barrier certificate data from timesteps $k$ through $N-1$ across the $K$ trajectories in that batch, yielding $n_k = K \times (N-k)$ total samples. We apply exponentially decaying weights $w_i = \rho^{n_k+1-i}$ to emphasize more recent timesteps, then solve \eqref{eq:optimal_lambda} to compute the optimal safety margin. We then use these safety margins as ground truth labels to train a predictive model to update $\lambda$ online.

\noindent \textbf{Online Updates.} 
To enable dynamic safety margin adaptation during deployment, we train a 2-layer LSTM network with 32 hidden units, denoted $\Phi$, that takes as input a feature vector $\phi$ concatenating the robot state $x_{\rob}$, human state $\hat{x}_{\hum}$, Euclidean distance between agents $\|p_{\rob} - p_{\hum}\|$, predicted barrier value $\mc{\hat{B}}$, and timestep $k$. We choose an LSTM to capture temporal dependencies in the interaction sequences, though simpler models such as MLPs could also be used. 

The LSTM is trained using MSE loss between the ground truth safety margins from the offline calibration phase and the predicted values from our model, which achieves low error on the training and validation data. During online deployment, we perform inference at each timestep $k$ to update $\tilde{\lambda}_k = \Phi(\phi_k)$ in \eqref{eq:crc_safety_filter}, allowing the robot to dynamically restrict the set of admissible controls in \eqref{eq:prediction_set} based on the uncertainty in the stochastic human policy.

\section{Experimental Results}
To demonstrate the effectiveness of our method, we evaluate CRC-SF through single-agent and multi-agent experiments designed to answer the following research questions: 

\noindent\textbf{RQ1.} How does CRC-SF compare in terms of safety and performance to baseline methods? 

\noindent\textbf{RQ2.} What impact does the online adaptation of the safety margin have on balancing safety and performance compared to fixed safety margins? 

\noindent\textbf{RQ3.} How does our approach perform in complex multi-agent scenarios where human behavior is modeled using a prediction model learned from real-world pedestrian data? 

\begin{figure}[t]
    \centering
    \hspace{-0.2in}  
    \includegraphics[width=1.0\columnwidth]{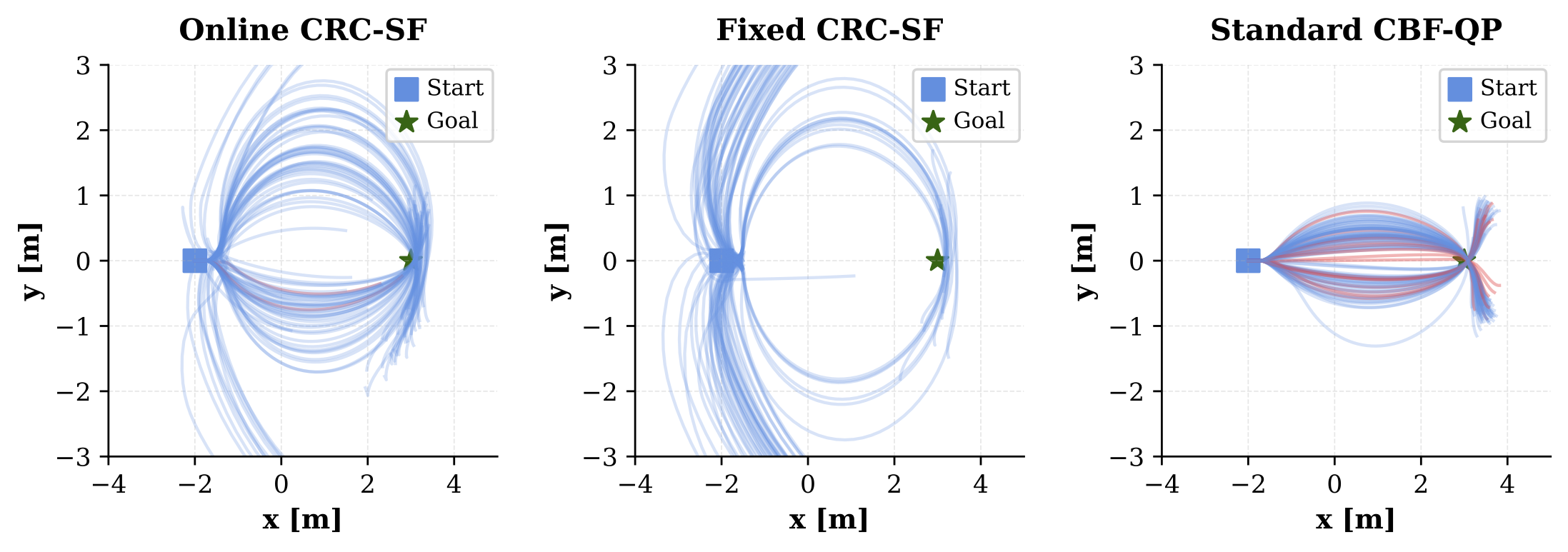}
    \caption{\small Trajectories generated from 100 single-agent head-on interactions with each method. Red trajectories indicate collisions.}
    \label{fig:head_on}
    \vspace{-1em}
\end{figure}

\subsection{Experimental setup}
\noindent \textbf{Choosing the risk threshold $\boldsymbol{\alpha}$.} To demonstrate the application of safety-critical control using our framework, we choose $\alpha = 0.01$ for all of the experiments. According to our theoretical analysis, this should roughly relate to $\gamma = 0.99$, seeking to provide high probabilistic safety guarantees. 

\noindent \textbf{Baselines.} We compare our approach \underline{Online CRC-SF} against three baselines: (i) \underline{CBF-QP} which does not consider uncertainty in human behavior. This baseline provides a direct comparison to show the benefits of incorporating human behavioral uncertainty into the safety constraints. 
(ii) \underline{Fixed CRC-CF} which uses a \textit{fixed} $\lambda$ value computed from considering the entire future trajectory in the calibration phase. This baseline method will help validate the use of an online update scheme to adjust the safety margin to account for different contexts. 
(iii) \underline{MPPI} \cite{Williams2015ModelPP}, a sampling-based method that can handle complex uncertainty distributions, but does not provide formal guarantees.  

\noindent \textbf{Metrics.} We evaluate performance using five key metrics: 
(i) \underline{collision rate}, measuring the percentage of simulations where the robot collides with a human; 
(ii) \underline{safety constraint violation rate}, measuring instances when the robot violates the safe control constraint but not necessarily resulting in collision; 
(iii) \underline{goal-reaching success rate}, capturing the percentage of simulations where the robot reaches within a small neighborhood of its target position within the allotted time horizon; 
(iv) \underline{control effort}, quantifying the magnitude of control inputs applied during the trajectory execution; and 
(v) \underline{control smoothness}, assessing the rate of change in control inputs over time. 

\subsection{Single-agent scenario}
We study a challenging head-on interaction between a human and robot, illustrated in Figure \ref{fig:head_on}. The human randomly chooses to pass either left or right of the robot, requiring the robot to adapt its decisions in real-time. The scenario mimics a corridor situation where the robot must pass the human safely while minimizing deviation from its path to efficiently reach the opposite side.

\noindent \textbf{Simulation setup.} Both the robot and human are modeled using the dynamically-extended unicycle model with state, $x = [p_x, p_y, \theta, v]^\top$ where $(p_x, p_y)$ represents the 2D position, $\theta$ is the heading angle, and $v$ is the velocity. The control inputs $u = [u_1, u_2]^\top$ represent steering and acceleration, bounded by $u \in [-0.3, 0.3] \times [-1, 1]$. The safety constraint is defined using a barrier function that maintains a minimum squared distance between the agents $h(x_\rob, x_\hum) = \|p_\rob - p_\hum\|_2^2 - R^2$ where $R = 1.0$m is the relative safety radius. The class-$\mc{K}_\infty$ function is a linear function $\mc{K}_{\mathrm{CBF}}(r) =  r$.

We set $N = 8.0$s (80 steps at $\Delta T = 0.1$). The human's behavior is simulated using a stochastic policy that generates noisy control inputs around a nominal goal-reaching trajectory, with bounded Gaussian noise ($\sigma = 1.2$, clipped to $\pm 0.5$) added to create noticeable uncertainty in the human's future states. We ran 100 test instances of the simulation.


\input{results_combined}


\noindent\textbf{Results.}
Metrics from this single-agent experiment are summarized in Table~\ref{tab:combined_results_comparison} (bottom).
This single-agent experiment offers insight into \textbf{RQ1} and \textbf{RQ2}.
We observe that our Online CRC-SF achieves the best safety-efficiency trade-off. CBF-QP and Fixed CRC-SF, are either very efficient but not safe, or vice versa. 
Both CRC-SF approaches perform well regarding the safety metrics, which is to be expected, but for Fixed CRC-SF, we see that it is very conservative where safety violations are close to zero percent, but has the worst efficiency by a significantly large margin.
While Online CRC-SF offers a ``close-second'' across all metrics; its safety and performance are significantly more similar to the best performing method in each metric than the worst.

\subsection{Multi-agent scenario}
We evaluate our approach using real-world pedestrian trajectory data recorded from crowd interactions \cite{PellegriniEssEtAl2009}.

\noindent \textbf{Human Behavior Model.} We adapt prior work \cite{MizutaLeung2024} to develop a diffusion model trained on real-world pedestrian trajectory datasets. This enables us to sample multiple possible human action sequences conditioned on the interaction history. Human states are then obtained by integrating these predicted controls using known single-integrator dynamics \cite{Schmerling2017MultimodalPM,SalzmannIvanovicEtAl2020}. This probabilistic formulation allows the robot to reason about the distribution of possible future human states and actions, which is essential for our safety-critical control approach. By sampling from this learned distribution, we can evaluate the probabilistic safety constraints and, crucially, dynamically adapt the robot's level of conservativeness based on the uncertainty in human behavior predictions.

\noindent\textbf{CBF construction.} 
Rather than enforcing constraints with respect to every human in the environment, which could be computationally intractable, we define a neighborhood around the robot within which safety guarantees must be maintained.
Given a robot state $x_{\rob}$, we consider the set of humans $\mc{N}(x_{\rob}) = \{x_{\hum}^i : \|p_{\rob} - p_{\hum}^i\| \leq R_\mc{N}\}$ where $R_\mc{N}=3$m is the neighborhood radius. For each human $i$ in this neighborhood, we construct a corresponding joint barrier function:
$h_i(x_{\rob}, x_{\hum}^i) = \|p_{\rob} - p_{\hum}^i\|_2^2 - R^2$.

\noindent\textbf{Test scenarios.}
We performed experiments on five different scenarios with different initial and goal positions for the robot.
For each scenario, we ran 100 different trials to account for the stochastic nature of the prediction model. 
An example of a multi-agent scenario is shown in Figure~\ref{fig:results_comparison}.

\subsection{Discussion}

\begin{figure}[t]
    \centering
    \begin{subfigure}[b]{0.49\linewidth}
        \centering
        \includegraphics[width=\linewidth]{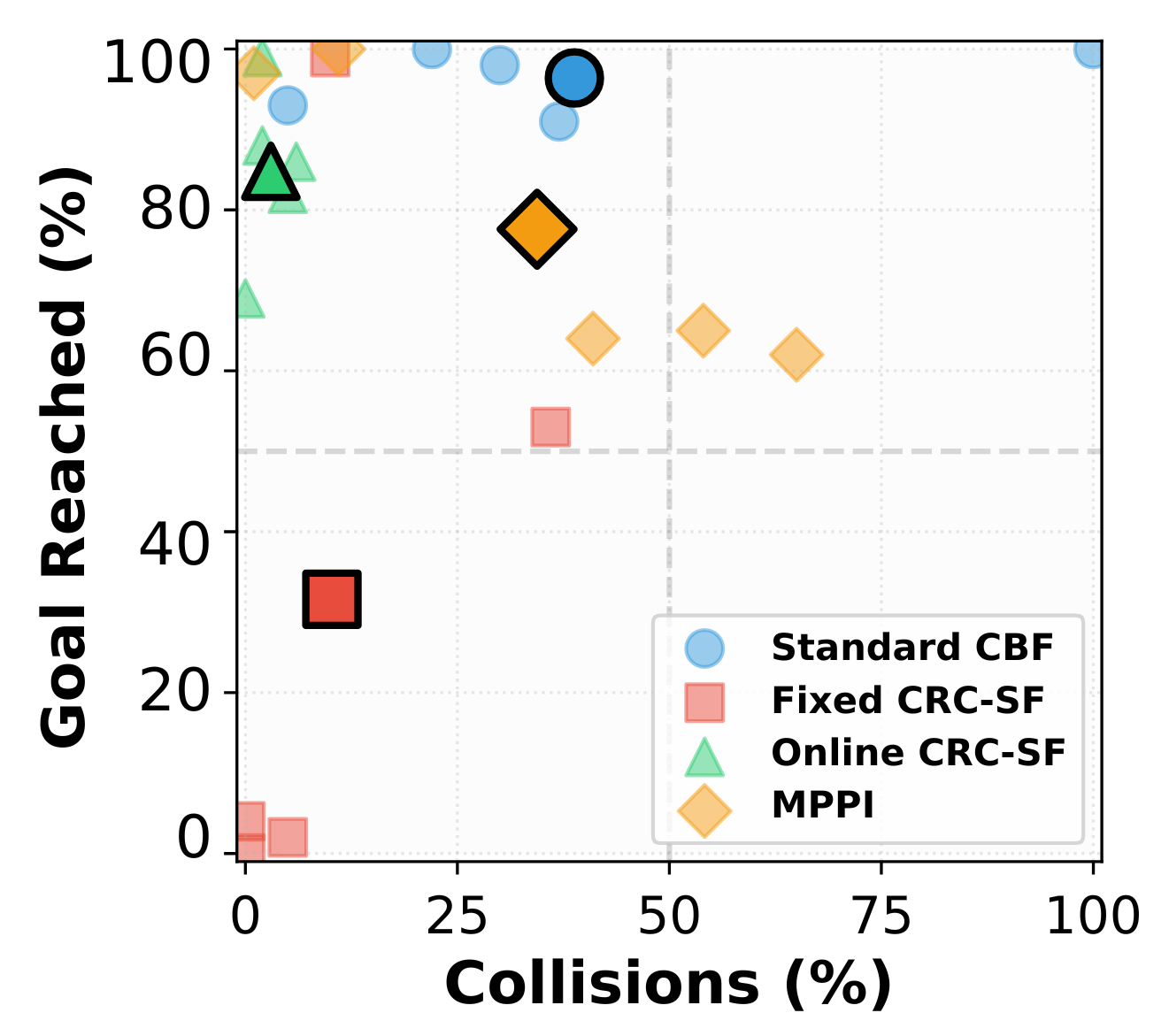}
    \end{subfigure}
    \hfill
    \begin{subfigure}[b]{0.49\linewidth}
        \centering
        \includegraphics[width=\linewidth]{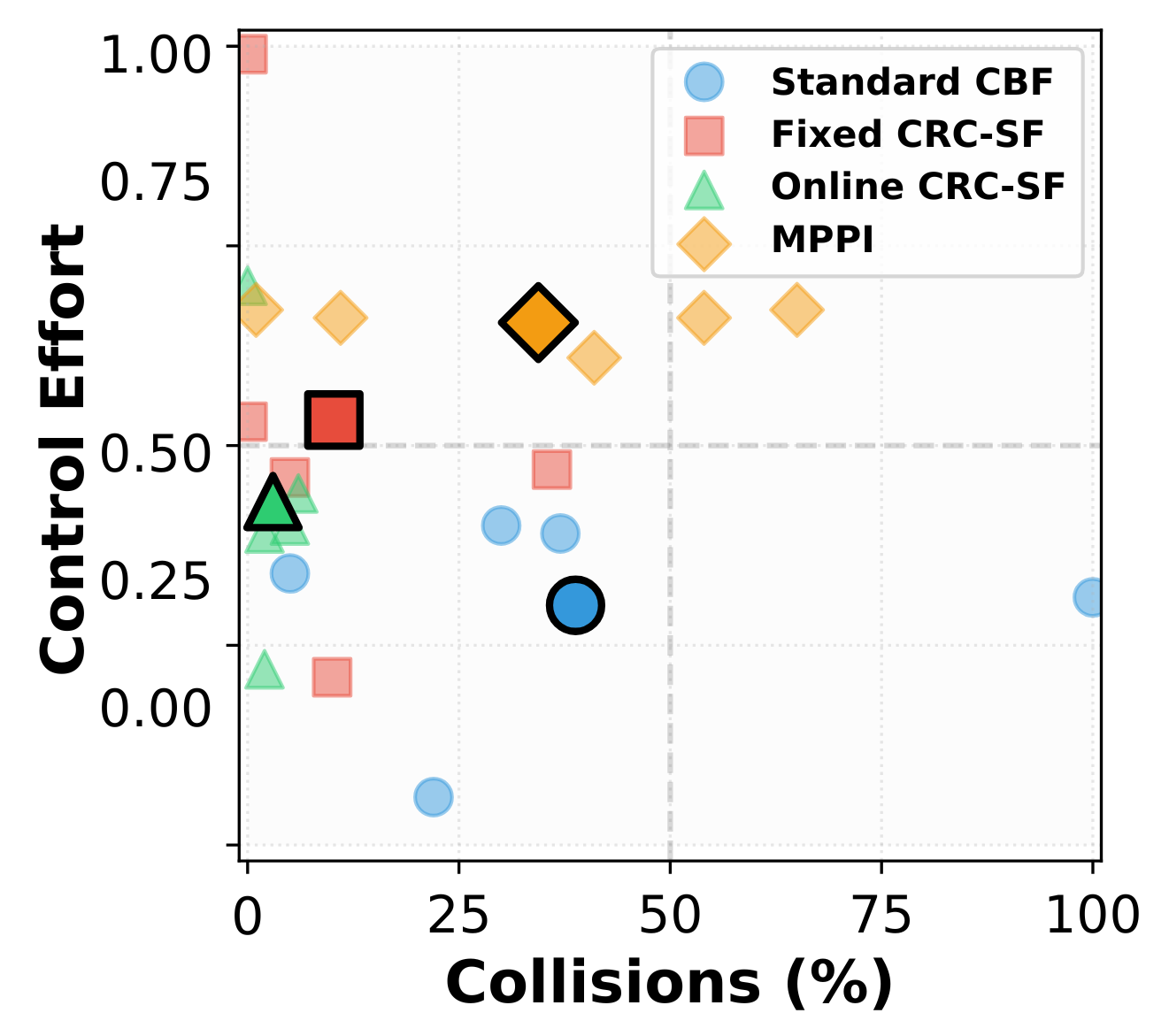}
    \end{subfigure}
    \caption{\small Efficiency versus safety results across five test scenarios for the multi-agent setting. Darker markers with a black outline indicate the mean value across the five test scenarios. Online CRC-SF achieves the best safety-efficiency trade-off while maintaining more consistent performance across scenarios.}
    \label{fig:combined_results_plot}
    \vspace{-1.5em}
\end{figure}

\noindent\textbf{Qualitative results.}
Figure~\ref{fig:results_comparison} visualizes the trajectories from Online CRC-SF and CBF-QP.
We observe that since Online CRC-SF method adapts its safety margin based on context, it proactively ``slows down and waits'' for a pedestrian to pass as it starts to turn and move perpendicularly to the direction of motion (i.e., towards the bottom of the plot). However, CBF-QP, which does not account for uncertainty, ``boldly cuts across'' collides with an oncoming pedestrian. The distance to the closest human over these interactions is shown in Figure~\ref{fig:results_comparison} (right). While Online CRC-SF occasionally violates the safety radius, it consistently recovers without collisions, unlike CBF-QP—suggesting future work to strengthen guarantees during temporary constraint violations.

\noindent\textbf{Quantitative results.}
Metrics from the multi-agent experiments are summarized in Table \ref{tab:combined_results_comparison} (top).
In answering \textbf{RQ3}, we see that even with this more challenging multi-agent setting and the use of a diffusion-based human behavior prediction model, we find that Online CRC-SF performs either the best or a close-second. These results reaffirm the \textbf{RQ1} and \textbf{RQ2} takeaways from the single-agent experiments.
Figure~\ref{fig:combined_results_plot} visualizes this safety-efficiency trade-off for each of the five test scenarios. Not only does Online CRC-SF offer the most desirable safety-efficiency trade-off, but its performance is also consistent across the five test scenarios whereas other methods exhibited higher variance.

\section{Conclusions}
We propose a safe probabilistic robot control framework for interacting with multiple human agents. The proposed safe controller can dynamically adjust its conservativeness based on the uncertainty on future human behaviors. 
We achieve this by leveraging techniques from CRC to form an online estimate of a safety margin for a CBF constraint that will achieve the desired probabilistic safe constraint. 
The experiments on synthetic and real-world interaction data indicate that the approach achieves the best balance between safety and efficiency compared to existing CBF methods. Moreover, the dynamic adjustment of the safety margin naturally induces proactive behaviors in situations with more uncertainty. Further discussion on limitations and future work is contained in Appendix~\ref{app:future_work}.

\bibliographystyle{IEEEtran}
\bibliography{references}

\onecolumn
\clearpage
\appendix
\section{Appendix}
\subsection{Limitations of the Work and Future Directions}
\label{app:future_work}
While Online CRC-SF demonstrates strong performance in the multi-agent navigation setting, there are some limitations to our approach.
First, both CRC-SF approaches will suffer if there is a distribution shift during online deployment. This is because several components rely on an offline dataset, including computing the optimal safety margin, the behavior prediction model, and for Online CRC-SF specifically, training the online safety margin estimation model. While the expectation is that the training data has sufficient coverage, it is not always possible in practice. As such, future work aims to account for distribution shift in the CRC formulation.
Second, as we learn a neural network model to estimate the safety margin parameter online, the estimation model is not perfect. We see this in our results where the robot does not exactly meet the specified $99\%$ probabilistic safe guarantee. Thus in practice, we cannot fully guarantee theoretical safety of our system. Instead, we sought to demonstrate this principled method empirically, and future work will include further testing on more diverse settings and applications, and also improving the estimation model and quantifying the estimation error.

\subsection{Proof of Lemma~\ref{lem:robust_cbf_lemma}}
\label{subsec:proof_robust_cbf}
\robustcbflemma*
\begin{proof}
 To show this holds, if suffices to show that that $u_k$ remains in $\mc{C}(x(t))$ for all $t\in [t_k,t_{k+1}]$. 
    By assumption we have that $x(t_k)\in \mc{S}$ which is equivalent to $h(x(t_k))\geq 0$.     Also by assumption we have that $u(t_k)= u_k\in \widetilde{\mc{C}}(x(t_k))\subseteq\mc{C}(x(t_k))$.
Let us start by showing that for any $\delta\in[0,\Delta T]$, the control input is in $\mc{C}(x(t_k+\delta))$. Using the Lipschitz continuity of the Lie derivatives, the solution to the dynamics, and the bound on the control inputs, we have that
\begin{align*}
        &L_fh(x(t_k+\delta))-L_fh(x(t_k))+\left(L_gh(x(t_k+\delta))-L_gh(x(t_k))\right)u_k\\
        &\quad\geq -\Lip_{h}\Lip_x (\Lip_f+\Lip_{g}\|u_k\|)\cdot \delta\\
        &\quad\geq -\Lip_{h}\Lip_x (\Lip_f+\Lip_{g}B_u)\cdot \delta.
\end{align*}

Rearranging, we have that
        \begin{align*}
        L_fh(x(t_k+\delta))+L_gh(x(t_k+\delta))u_k &\geq 
      L_fh(x(t_k))+L_gh(x(t_k))-\Lip_{h}\Lip_x (\Lip_f+\Lip_{g}B_u)\cdot \delta.      
    \end{align*}
Adding and subtracting $\Lip_{\alpha\circ h}\Lip_x\delta$ on the right hand side of the inequality, we have that 
            \begin{align*}
        L_fh(x(t_k+\delta))+L_gh(x(t_k+\delta))u_k&\geq 
      L_fh(x(t_k))+L_gh(x(t_k))-\Lip_{h}\Lip_x (\Lip_f+\Lip_{g}B_u+\Lip_{\mc{K}\circ h})\cdot \delta+ \Lip_{\mc{K}\circ h}\Lip_x\delta   \\
      &\geq -\mc{K}(h(x(t_k))+ \Lip_{\mc{K}\circ h}\Lip_x\delta.
    \end{align*}
    Since $\mc{K}\circ h$ is Lipschitz continuous, we also have that
    \[|\mc{K}(h(x(t_k+\delta))-\mc{K}(h(x(t_k))|\leq \Lip_{\mc{K}\circ h}\Lip_x\delta\;\Leftrightarrow\; -\Lip_{\mc{K}\circ h}\Lip_x\delta\leq \mc{K}(h(x(t_k+\delta))-\mc{K}(h(x(t_k))\leq \Lip_{\mc{K}\circ h}\Lip_x\delta.\]
    Therefore, by adding and subtracting $\mc{K}(h(x(t_k+\delta))$ on the right hand side of the above inequality, we deduce that
               \begin{align*}
        L_fh(x(t_k+\delta))+L_gh(x(t_k+\delta))u_k 
      &\geq \mc{K}(h(x(t_k+\delta))-\mc{K}(h(x(t_k))+ \Lip_{\mc{K}\circ h}\Lip_x\delta- \mc{K}(h(x(t_k+\delta))\\
      &\geq - \mc{K}(h(x(t_k+\delta)).
    \end{align*} 
    This shows for any $\delta\in [0,\Delta T]$, that we have $u_k\in \mc{C}(x(t_k+\delta))$ and therefore $x(t_k+\delta)\in \mc{S}$ as a consequence by the standard continuous time CBF arguments. 
\end{proof}

\subsection{Proof of Lemma~\ref{lem:crc_concentration}}
\label{subsec:proof_crc_concentration}
\barriervalueconcentration*

\begin{proof}
First, observe that by the definition of our loss function $L(\hat{\lambda})$ in equation \eqref{eq:crc_loss}:
\[
   L(\hat{\lambda}) = \max\left(0, |\mc{B}_k - \mc{\hat{B}}_k| - \hat{\lambda}\right).
\]
For any $\epsilon > 0$, if the barrier prediction error exceeds $\hat{\lambda} + \epsilon$, then the loss must be greater than $\epsilon$:
\[
   |\mc{B}_k - \mc{\hat{B}}_k| > \hat{\lambda} + \epsilon \implies L(\hat{\lambda}) > \epsilon.
\]
This implication allows us to bound the probability of large prediction errors as follows: 
\[
   \Pr(|\mc{B}_k - \mc{\hat{B}}_k| > \hat{\lambda} + \epsilon) = \Pr(L(\hat{\lambda}) > \epsilon).
\]

Recall Markov's concentration inequality: for any non-negative random variable $X$ and constant $a > 0$, 
\[
   \Pr(X > a) \leq \frac{\mathbb{E}[X]}{a}.
\]
Applying Markov's inequality to the loss function and using the non-exchangeable CRC guarantee that $\mathbb{E}[L(\hat{\lambda})] \leq \alpha + \beta$. we have that
\[
   \Pr(L(\hat{\lambda}) > \epsilon) \leq \frac{\mathbb{E}[L(\hat{\lambda})]}{\epsilon} \leq \frac{\alpha + \beta}{\epsilon}.
\]
Setting $\epsilon = \frac{\alpha + \beta}{\gamma}$, where $\alpha$ is the user-specified risk threshold, $\beta$ is the total variation penalty term, and $\gamma$ is the user-specified confidence level, we obtain
\[
   \Pr\left(L(\hat{\lambda}) > \frac{\alpha + \beta}{\gamma}\right) \leq \gamma.
\]
Taking the complement of this probability, we have that
\[
   \Pr\left(L(\hat{\lambda}) \leq \frac{\alpha + \beta}{\gamma}\right) \geq 1 - \gamma.
\]
Since $L(\hat{\lambda}) = \max\left(0, |\mc{B}_k - \mc{\hat{B}}_k| - \hat{\lambda}\right)$, we have that
\[
   \Pr\left(|\mc{B}_k - \mc{\hat{B}}_k| - \hat{\lambda} \leq \frac{\alpha + \beta}{\gamma}\right) \geq 1 - \gamma.
\]
Rearranging, we have that
\[
   \Pr\left(|\mc{B}_k - \mc{\hat{B}}_k| \leq \hat{\lambda} + \frac{\alpha + \beta}{\gamma}\right) \geq 1 - \gamma.
\]
Thus, with $\epsilon = (\alpha + \beta)/\gamma$, we conclude that
\[
   \Pr(|\mc{B}_k - \mc{\hat{B}}_k| \leq \hat{\lambda} + \epsilon) \geq 1 - \gamma.
\]
This completes the proof. 
\end{proof}

\subsection{Proof of Theorem~\ref{thm:crc_cbf}}
\label{subsec:proof_crc_cbf}
\crccbfsafety*

\begin{proof}
    From Lemma~\ref{lem:crc_concentration}, we know that with $\epsilon = \frac{\alpha + \beta}{\gamma}$, the following holds:
\[
   \Pr(|\mc{B}_k - \mc{\hat{B}}_k| \leq \hat{\lambda} + \epsilon) \geq 1 - \gamma.
\]
For any robot control action $u_\rob \in \mc{C}_{\lambda}$, by definition of the prediction set in \eqref{eq:prediction_set}, we have that
\[
   \mc{\hat{B}}_k - (\hat{\lambda} + \epsilon) \geq 0.
\]
With probability at least $1-\gamma$, the barrier prediction error is bounded so that
\[
   |\mc{B}_k - \mc{\hat{B}}_k| \leq \hat{\lambda} + \epsilon.
\]
This inequality implies that both of the following inequalities hold simultaneously:
\[
   \mc{B}_k - \mc{\hat{B}}_k \leq \hat{\lambda} + \epsilon \Longleftrightarrow \mc{B}_k \leq \mc{\hat{B}}_k + (\hat{\lambda} + \epsilon),
\]
and 
\[
 \mc{B}_k - \mc{\hat{B}}_k \geq -(\hat{\lambda} + \epsilon) \Longleftrightarrow \mc{B}_k \geq\mc{\hat{B}}_k  -(\hat{\lambda} + \epsilon).
\]
Suppose we select a control input such that 
\[
    \mc{\hat{B}}_k  -(\hat{\lambda} + \epsilon)\geq 0.
\]
Combining this with the lower bound established above, we have that
\[
    \mc{B}_k \geq \mc{\hat{B}}_k - (\hat{\lambda} + \epsilon) \geq 0.
\]
By the properties of barrier certificates and Lemma~\ref{lem:prob_safety_guarantee}, we know that
\[
   \mc{B}_k \geq 0 \implies h(x_{k+1}) \geq 0.
\]
Therefore, we have the following chain of probabilities:
\begin{align*}
   \Pr(h(x_{k+1}) \geq 0) &\geq \Pr(\mc{B}_k \geq 0), \\
   &\geq \Pr(|\mc{B}_k - \mc{\hat{B}}_k| \leq \hat{\lambda} + \epsilon), \\
   &\geq 1 - \gamma.
\end{align*}
This establishes the desired probabilistic safety guarantee:
\[
   \Pr(h(x_{k+1}) \geq 0) \geq 1 - \gamma.
\]
This completes the proof. 
\end{proof}

\end{document}

%% file: macros.tex
\newcommand{\bmat}[1]{\begin{bmatrix}#1\end{bmatrix}}
\newcommand{\bdd}{\xi}

%% file: results_combined.tex
\begin{table}[t]
\centering
\begin{tabular}{l|ccccc}
\toprule
\multicolumn{6}{c}{\textbf{Multi-Agent Scenario}} \\
\midrule
\textbf{Method} & \begin{tabular}[c]{@{}c@{}}Coll.\\ (\%)↓\end{tabular} & \begin{tabular}[c]{@{}c@{}}Safety\\ Viol.(\%)↓\end{tabular} & \begin{tabular}[c]{@{}c@{}}Goal\\ (\%)↑\end{tabular} & \begin{tabular}[c]{@{}c@{}}Control\\ Effort↓\end{tabular} & \begin{tabular}[c]{@{}c@{}}Control\\ Smooth.↓\end{tabular} \\
\midrule
CBF-QP & 38.8 & 81.4 & 96.4 & 0.30 & 0.12 \\
Fixed CRC-SF & 10.2 & 49.0 & 31.6 & 0.53 & 0.23 \\
\rowcolor{lightblue}Online CRC-SF & 3.0 & 53.2 & 84.8 & 0.43 & 0.19 \\
MPPI & 34.4 & 81.2 & 77.6 & 0.65 & 0.46 \\
\midrule
\multicolumn{6}{c}{\textbf{Single-Agent Scenario}} \\
\midrule
CBF-QP & 16.0 & 57.0 & 100.0 & 0.31 & 0.12 \\
Fixed CRC-SF & 0.0 & 1.0 & 14.0 & 1.35 & 0.30 \\
\rowcolor{lightblue}Online CRC-SF & 2.0 & 15.0 & 78.0 & 0.59 & 0.20 \\
\bottomrule
\end{tabular}
\caption{\small Performance comparison between multi-agent and single-agent human-robot navigation scenarios. The multi-agent results represent averages across five different test configurations (each with distinct initial and goal positions), with each configuration run 100 times. The single-agent results represent averages from 100 head-on simulations.}
\label{tab:combined_results_comparison}
\vspace{-2em}
\end{table}